\newtheorem{theorem}{Theorem}
\DeclareMathOperator{\Tr}{Tr}
\definecolor{mydarkblue}{rgb}{0,0.08,0.45}
\begin{document}

\twocolumn[

\aistatstitle{Fenchel Lifted Networks: \\
A Lagrange Relaxation of Neural Network Training}

\aistatsauthor{Fangda Gu* \And Armin Askari* \And  Laurent El Ghaoui}

\aistatsaddress{UC Berkeley \And  UC Berkeley \And UC Berkeley} 
\aistatsaddress{}
]

\begin{abstract}
  Despite the recent successes of deep neural networks, the corresponding training problem remains highly non-convex and difficult to optimize. Classes of models have been proposed that introduce greater structure to the objective function at the cost of lifting the dimension of the problem. However, these lifted methods sometimes perform poorly compared to traditional neural networks. In this paper, we introduce a new class of lifted models, Fenchel lifted networks, that enjoy the same benefits as previous lifted models, without suffering a degradation in performance over classical networks. Our model represents activation functions as equivalent biconvex constraints and uses Lagrange Multipliers to arrive at a rigorous lower bound of the traditional neural network training problem. This model is efficiently trained using block-coordinate descent and is parallelizable across data points and/or layers. We compare our model against standard fully connected and convolutional networks and show that we are able to match or beat their performance.
\end{abstract}

\section{Introduction}

Deep neural networks (DNNs) have become the preferred model for supervised learning tasks after their success in various fields of research. However, due to their highly non-convex nature, DNNs pose a difficult problem during training time; the optimization landscape consists of many saddle points and local minima which make the trained model generalize poorly \citep{entropy2016, dauphin2014}. This has motivated regularization schemes such as weight decay \citep{krogh1992}, batch normalization \citep{ioffe2015}, and dropout \citep{dropout} so that the solutions generalize better to the test data. 

In spite of this, backprop used along with stochastic gradient descent (SGD) or similar variants like Adam \citep{kingma2015adam} suffer from a variety of problems. One of the most notable problems is the vanishing gradient problem which slows down gradient-based methods during training time. Several approaches have been proposed to deal with the problem; for example, the introduction of rectified linear units (ReLU). However, the problem persists. For a discussion on the limitations of backprop and SGD, we direct the reader to Section 2.1 of  \citet{taylor2016training}.

One approach to deal with this problem is to introduce auxiliary variables that increase the dimension of the problem. In doing so, the training problem decomposes into multiple, local sub-problems which can be solved efficiently without using SGD or Adam; in particular, the methods of choice have been block coordinate descent (BCD) \citep{askari2018, lau2018proximal, zhang2017convergent, pmlr-v33-carreira-perpinan14} and the alternating direction method of multipliers (ADMM) \citep{taylor2016training, zhang2016efficient}. By lifting the dimension of the problem, these models avoid many of the problems DNNs face during training time. In addition, lifting offers the possibility of penalizing directly the added variables, which opens up interesting avenues into the interpretability and robustness of the network.

While these methods, which we refer to as ``lifted'' models for the remainder of the paper, offer an alternative to the original problem with some added benefits, they have their limitations. Most notably, traditional DNNs are still able to outperform these methods in spite of the difficult optimization landscape. As well, most of the methods are unable to operate in an online manner or adapt to continually changing data sets which is prevalent in most reinforcement learning settings \citep{Sutton:1998:IRL:551283}. Finally, by introducing auxiliary variables, the dimensionality of the problem greatly increases, making these methods very difficult to train with limited computational resources.

\subsection{Paper contribution}
To address the problems listed above, we propose Fenchel lifted networks, a biconvex formulation for deep learning based on Fenchel's duality theorem that can be optimized using BCD. We show that our method is a  rigorous \emph{lower bound} for the learning problem and admits a natural batching scheme to adapt to changing data sets and settings with limited computational power. We compare our method against other lifted models and against traditional fully connected and convolutional neural networks. We show that we are able to outperform the former and that we can compete with or even outperform the latter.

\paragraph{Paper outline.} 
In Section \ref{sec:relatedwork}, we give a brief overview of related works on lifted models. In Section \ref{sec:background} we introduce the notation for the remainder of the paper. Section \ref{sec:fn} introduces Fenchel lifted networks, their variants and discusses how to train these models using BCD. Section \ref{sec:exp} compares the proposed method against fully connected and convolutional networks on MNIST and CIFAR-10.

\section{Related Work} \label{sec:relatedwork}




\paragraph{Lifted methods}
Related works that lift the dimension of the training problem are primarily optimized using BCD or ADMM. These methods have experienced recent success due to their ability to exploit the structure of the problem by first converting the constrained optimization problem into an unconstrained one and then solving the resulting sub-problems in parallel. They do this by relaxing the network constraints and introducing penalties into the objective function. The two main ways of introducing penalties into the objective function are either using quadratic penalties \citep{Sutskever:2013:IIM:3042817.3043064,taylor2016training, lau2018proximal} or using equivalent representations of the activation functions \citep{askari2018, zhang2017convergent}. 

As a result, these formulations have many advantages over the traditional training problem, giving superior performance in some specific network structures \citep{pmlr-v33-carreira-perpinan14, zhang2017convergent}. These methods also enjoy great potential to parallelize as shown by \citet{taylor2016training}. However, there has been little evidence showing that these methods can compete with traditional DNNs which shadows the nice structure these formulations bring about. 

An early example of auxiliary variables being introduced into the training problem is the method of auxiliary coordinates (MAC) by \citet{pmlr-v33-carreira-perpinan14} which uses quadratic penalties to enforce network constraints. They test their method on auto encoders and show that their method is able to outperform SGD. Followup work by \citet{carreira2016parmac, taylor2016training} demonstrate the huge potential for parallelizing these methods. \citet{lau2018proximal} gives some convergence guarantee on a modified problem. 

Another class of models that lift the dimension of the problem do so by representing activation functions in equivalent formulations. \citet{Negiar2017, askari2018, zhang2017convergent, li2019lifted} explore the structure of activation functions and use $\arg \min$ maps to represent activation functions. In particular, \citet{askari2018} show how a strictly monotone activation function can be seen as the $\arg \min$ of a specific optimization problem. Just as with quadratic penalties, this formulation of the problem still performs poorly compared to traditional neural networks.

\section{Background and Notation} \label{sec:background}
\paragraph{Feedforward neural networks.} We are given an input data matrix of $m$ data points $ X = [x_1, x_2, ..., x_m] \in \mathbb{R}^{n\times m} $ and a response matrix $ Y \in \mathbb{R}^{p\times m}$. We consider the supervised learning problem involving a neural network with $ L \geq 1 $ hidden layers. The neural network produces a prediction $\hat{Y} \in \mathbb{R}^{p\times m} $ with the feed forward recursion $ \hat{Y} = W_L X_{L} + b_L \bm{1}^\top$ given below
\begin{align} \label{fnnr_old}
    X_{l+1} = \phi_l(W_l X_l + b_l \bm{1}^\top), ~~ l = 0, \hdots, L-1.
\end{align}

where $\phi_l, l = 0,\hdots,L$ are the activation functions that act column-wise on a matrix input, $\textbf{1} \in \mathbb{R}^m$ is a vector of ones, and $W_l \in \mathbb{R}^{p_{l+1} \times p_l}$ and $b_l \in \mathbb{R}^{p_{l+1}}$ are the weight matrices and bias vectors respectively. Here $p_l$ is the number of output values for a single data point (i.e., hidden nodes) at layer $l$ with $p_0 = n$ and $p_{L+1} = p$. Without loss of generality, we can remove $b_l \textbf{1}^\top$ by adding an extra column to $W_l$ and a row of ones to $X_l$. Then \eqref{fnnr_old} simplifies to
\begin{align} \label{fnnr}
    X_{l+1} = \phi_l(W_l X_l), ~~ l = 0, \hdots, L-1.
\end{align}

In the case of fully connected networks, $\phi_l$ is typically sigmoidal activation functions or ReLUs. In the case of Convolutional Neural Networks (CNNs), the recursion can accommodate convolutions and pooling operations in conjunction with an activation. For classification tasks, we typically apply a softmax function after applying an affine transformation to $X_L$.

The initial value for the recursion is $X_0 = X$ and $X_l\in \mathbb{R}^{p_l \times m}, \; l = 0,\hdots, L$. We refer to the collections $(W_l)_{l=0}^L$ and $(X_l)_{l=1}^{L}$ as the $W$ and $X$-variables respectively.

The weights are obtained by solving the following constrained optimization problem
\begin{align} \label{nn}
	\min_{(W_l)_{l=0}^{L}, (X_l)_{l=1}^{L}} & \: {\cal L}(Y,W_L X_L) + \sum^L_{l=0} \rho_l \pi_l(W_l) \notag \\ 
	\mbox{s.t.}  \;\; &X_{l+1} = \phi_l(W_l X_l),\; l = 0,\hdots, L-1 \notag \\ 
    \; &X_0 = X 
\end{align}

Here, ${\cal L} $ is a loss function, $\rho \in \mathbb{R}^{L+1}_+$ is a hyper-parameter vector, and $\pi_l$'s are penalty functions used for regularizing weights, controlling network structures, etc. In \eqref{nn}, optimizing over the $X$-variables is trivial; we simply apply the recursion \eqref{fnnr} and solve the resulting unconstrained problem using SGD or Adam. After optimizing over the weights and biases, we obtain a prediction $\hat{Y}$ for the test data $X$ by passing $X$ through the recursion \eqref{fnnr} one layer at a time.


\paragraph{Our model.}
We develop a family of models where we approximate the recursion constraints \eqref{fnnr} via penalties. We use the $\arg \min$ maps from \citet{askari2018} to create a biconvex formulation that can be trained efficiently using BCD and show that our model is a lower bound of \eqref{nn}. Furthermore, we show how our method can naturally be batched to ease computational requirements and improve the performance.

\section{Fenchel lifted networks}\label{sec:fn}
In this section, we introduce Fenchel lifted networks. We begin by showing that for a certain class of activation functions, we can equivalently represent them as biconvex constraints. We then dualize these constraints and construct a lower bound for the original training problem. We show how our lower bound can naturally be batched and how it can be trained efficiently using BCD.

\subsection{Activations as bi-convex constraints}
In this section, we show how to convert the equality constraints of \eqref{nn} into inequalities which we dualize to arrive at a \textit{relaxation} (lower bound) of the problem. In particular, this lower bound is biconvex in the $W$-variables and $X$-variables. We make the following assumption on the activation functions $\phi_l$.

\begin{quote}
\textbf{BC Condition} The activation function $\phi : \mathbb{R}^p \rightarrow \mathbb{R}^q$ satisfies the BC condition if there exists a \emph{biconvex} function $B_\phi : \mathbb{R}^{p} \times \mathbb{R}^{p} \rightarrow \mathbb{R}_+$, such that 
\[
v = \phi(u) \Longleftrightarrow B_\phi(v,u) \leq 0.
\]
\end{quote}

We now state and prove a result that is at the crux of Fenchel lifted networks.

\begin{theorem}\label{thm:bc}
Assume $\phi: \mathbb{R} \rightarrow \mathbb{R}$ is continuous, strictly monotone and that $0 \in \text{range}(\phi)$ or $0 \in \text{domain}(\phi)$. Then $\phi$ satisfies the BC condition.
\end{theorem}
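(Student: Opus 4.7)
The plan is to use Fenchel's duality theorem in its classical form (the Young inequality for conjugate functions) to construct $B_\phi$ as a Fenchel--Young gap. Specifically, since $\phi$ is continuous and strictly monotone, up to a sign flip I may assume $\phi$ is strictly increasing. Then $\phi$ is the derivative (in the weak/subdifferential sense) of a convex function $F$ obtained by integrating $\phi$. Its Fenchel conjugate $F^*$ is also convex, and Fenchel--Young says $F(u)+F^*(v)\ge uv$ with equality iff $v\in\partial F(u)=\{\phi(u)\}$. That suggests the choice
\[
B_\phi(v,u) \;=\; F(u) + F^*(v) - uv,
\]
which is biconvex because $F(u)$ is convex in $u$ (and constant in $v$), $F^*(v)$ is convex in $v$ (and constant in $u$), and $-uv$ is bilinear hence separately convex in each argument.

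The first step is to construct $F$ cleanly. If $0\in\operatorname{domain}(\phi)$, set $F(u)=\int_{0}^{u}\phi(s)\,ds$; the integrand is monotone, so $F$ is well-defined and convex. If instead $0\in\operatorname{range}(\phi)$, pick $u_0$ with $\phi(u_0)=0$ and set $F(u)=\int_{u_0}^{u}\phi(s)\,ds$; this normalization guarantees that $F$ attains its minimum at $u_0$ and is nonnegative, which in turn makes $F^*$ finite on a nontrivial set containing $0$. The role of the hypothesis ``$0\in\operatorname{range}(\phi)$ or $0\in\operatorname{domain}(\phi)$'' is exactly to provide a base point for the integral so that neither $F$ nor $F^*$ is identically $+\infty$. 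If $\phi$ is strictly decreasing, I would apply the argument to $\tilde\phi=-\phi$ and set $B_\phi(v,u)=B_{\tilde\phi}(-v,u)$; biconvexity and the iff statement carry over because $v\mapsto -v$ is an affine bijection.

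The second step is to verify the BC condition. Nonnegativity of $B_\phi$ is the Fenchel--Young inequality applied to the convex function $F$. For the equivalence, strict monotonicity of $\phi$ means $F$ is strictly convex on the interior of its domain, so $\partial F(u)$ is the singleton $\{\phi(u)\}$ wherever $\phi$ is defined; hence $B_\phi(v,u)=0$ exactly when $v=\phi(u)$. Biconvexity, as already noted, is immediate from the decomposition into $F(u)+F^*(v)-uv$.

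The main obstacle is bookkeeping around the domain/range conditions: I need to ensure $F^*$ is proper (i.e.\ not $+\infty$ everywhere) and that $B_\phi$ takes the value $0$ on the graph of $\phi$ rather than merely being nonnegative. The normalization via $u_0$ or $u=0$ is what makes this work, and it is the only nontrivial use of the hypothesis in the theorem. Once that is handled, the rest is a direct invocation of the classical Fenchel--Young inequality for a one-dimensional convex function whose derivative is precisely $\phi$.
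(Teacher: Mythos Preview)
Your proposal is correct and follows essentially the same approach as the paper: both construct $B_\phi$ as the Fenchel--Young gap associated to an antiderivative of $\phi$ (or of $\phi^{-1}$), use the base point supplied by the hypothesis to make the integrals well-defined, and read off biconvexity from the decomposition into a convex function of $u$, a convex function of $v$, and a bilinear term. The only cosmetic difference is that the paper names $F$ the antiderivative of $\phi^{-1}$ (so that $F^*$ is the antiderivative of $\phi$), whereas you do the reverse; since $(F^*)^*=F$, the resulting $B_\phi$ is identical.
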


\begin{proof}
Without loss of generality, $\phi$ is strictly increasing. Thus it is invertible and there exists $\phi^{-1}$ such that $u = \phi^{-1}(v)$ for $v \in \text{range}(\phi)$ which implies $v = \phi(u)$. Now, define $F: \mathbb{R}^{p} \rightarrow \mathbb{R}$ as
\begin{align*}
F(v) &:= \int_{z}^{v} \phi^{-1}(\xi) \; d\xi 
\end{align*}

where $z \in \text{range}(\phi)$ and is either $0$ or satisfies $\phi^{-1}(z) = 0$. Then we have
\begin{align} \label{equ:fencheldivergence}
F^\ast (u) &=  \int_{\phi^{-1}(z)}^{u} \phi(\eta) \; d\eta \notag \\
B(v,u) &= F(v) + F^\ast(u) - uv
\end{align}

where $F^\ast$ is the Fenchel conjugate of $F$. By the Fenchel-Young inequality, $B(v,u) \geq 0$ with equality if and only if 
\begin{align*}
    v^\ast = \arg\max_v \; uv - F(v) \; : \; v \in \text{range}(\phi)
\end{align*}

By construction, $v^\ast = \phi(u)$. Note furthermore since $\phi$ is continuous and strictly increasing, so is $\phi^{-1}$ on its domain, and thus $F, F^\ast$ are convex. It follows that $B(v,u)$ is a biconvex function of $(u,v)$. 

We simply need to prove that $F^\ast(u)$ above is indeed the Fenchel conjugate of $F$. By definition of the Fenchel conjugate we have that
\begin{align*}
    F^\ast (u) = \max_v \; u v - F(v) \; : \; v \in \text{range}(\phi)
\end{align*}

It is easy to see that $v^\ast = \phi(u)$. Thus
\begin{align*}
    F^\ast(u) &= u\phi(u) - F(\phi(u)) \\
    &=  u \phi(u) - \int_{z}^{\phi (u)} \phi^{-1} (\xi) \; d\xi \\
    &= \int_{z}^{\phi(u)} \xi \dfrac{d}{d \xi} \phi^{-1} (\xi) \; d\xi \\
    &=  \int_{\phi^{-1}(z)}^{u} \phi(\eta) \; d \eta
\end{align*}
where the third equality is a consequence of integration by parts, and the fourth equality we make the subsitution $\eta = \phi^{-1} (\xi)$
\end{proof}

Note that Theorem \ref{thm:bc} implies that activation functions such as sigmoid and tanh can be equivalently written as a biconvex constraint. Although the ReLU is not strictly monotone, we can simply restrict the inverse to the domain $\mathbb{R}_+$; specifically, for $\phi(x) = \max(0,x)$ define
\[
\phi^{-1} (z) = \left\{ \begin{array}{ll} +\infty & \mbox{if } z < 0, \\ z & \mbox{if } z \ge 0,
\end{array} \right.
\]

Then, we can rewrite the ReLU function as the equivalent set of biconvex constraint
\[
    v = \max(0,u) \Longleftrightarrow \left\{ 
    \begin{array}{ll} \dfrac{1}{2} v^2 + \dfrac{1}{2} u_+^2 - uv \leq 0\\
    v \geq 0
    \end{array} \right.
\]
where $u_+ = \max(0,u)$. This implies
\begin{equation} \label{eq:reluBC}
        B_\phi(v,u) =
        \left\{ \begin{array}{ll} \dfrac{1}{2} v^2 + \dfrac{1}{2} u_+^2 - uv  & \text{if} \; v \geq 0 \\ 
    +\infty & \text{otherwise}
    \end{array} \right.
\end{equation}

Despite the non-smoothness of $u_+$, for fixed $u$ or fixed $v$,  \eqref{eq:reluBC} belongs in $C^1$ -- that is, it has continuous first derivative and can be optimized using first order methods. We can trivially extend the result of Theorem \ref{thm:bc} for matrix inputs: for matrices $U,V \in \mathbb{R}^{p \times q}$, we have
\begin{align*}
    B_\phi(V,U) = \sum_{i,j} B_\phi(V_{ij},U_{ij}).
\end{align*}



\subsection{Lifted Fenchel model}

Assuming the activation functions of \eqref{nn} satisfy the hypothesis of Theorem \ref{thm:bc}, we can reformulate the learning problem equivalently as
\begin{align} \label{fnn}
	\min_{(W_l)_{l=0}^{L}, (X_l)_{l=1}^{L}} & \: {\cal L}(Y,W_L X_L) + \sum^L_{l=0} \rho_l \pi_l(W_l) \notag\\ 
	 \mbox{s.t.} \; \; B_l(X_{l+1}&, W_lX_l) \le 0, \; l = 0,\hdots, L-1 \notag \\
    X_0 = X,
\end{align}
where $B_l$ is the short-hand notation of $B_{\phi_l}$. We now dualize the inequality constraints and obtain the lower bound of the standard problem \eqref{nn} via Lagrange relaxation
\begin{align} \label{fnnp2}
	G(\lambda) := &\min_{(W_l)_{l=0}^{L}, (X_l)_{l=1}^{L}} \: {\cal L}(Y,W_LX_L) + \sum^L_{l=0}\rho_l \pi_l(W_l)\notag \\
	&+ \sum^{L-1}_{l=0} \lambda_l B_l(X_{l+1}, W_lX_l) \notag \\ 
	\mbox{s.t.} & \: X_0 = X ,
\end{align}
where $\lambda_l \geq 0$ are the Lagrange multipliers. The maximum lower bound can be achieved by solving the dual problem
\begin{align}
    p^\ast \geq d^\ast = \max_{\lambda \geq 0} G(\lambda)
\end{align}
where $p^\ast$ is the optimal value of \eqref{nn}. Note if all our activation functions are ReLUs, we must also include the constraint $X_l \geq 0$ in the training problem as a consequence of \eqref{eq:reluBC}. Although the new model introduces $L$ new parameters (the Lagrange multipliers), we can show that using variable scaling we can reduce this to only \emph{one} hyperparameter (for details, see \ref{apx:varscale}). The learning problem then becomes
\begin{align} \label{fnnp}
	G(\lambda) := &\min_{(W_l)_{l=0}^{L}, (X_l)_{l=1}^{L}} \: {\cal L}(Y,W_LX_L) + \sum^L_{l=0}\rho_l \pi_l(W_l)\notag \\
	&+ \lambda \sum^{L-1}_{l=0} B_l(X_{l+1}, W_lX_l) \notag \\ 
	\mbox{s.t.} & \: X_0 = X .
\end{align}

In a regression setting where the data is generated by a one layer network, we are able to provide global convergence guarantees of the above model (for details, see \ref{apx:global}).

\paragraph{Comparison with other methods.} 
For ReLU activations, $B(v,u)$ as in \eqref{eq:reluBC} differs from the penalty terms introduced in previous works. In \citet{askari2018, zhang2017convergent} they set $B(v,u) = \|v-u\|_2^2$ and in \citet{taylor2016training, pmlr-v33-carreira-perpinan14} they set $B(v,u) = \|v - u_+\|_2^2$. Note that $B(v,u)$ in the latter is not biconvex. While the $B(v,u)$ in the former is biconvex, it does not perform well at test time.
\cite{li2019lifted} set $B(v,u)$ based on a proximal operator that is similar to the BC condition. 

\paragraph{Convolutional model.}
Our model can naturally accommodate average pooling and convolution operations found in CNNs, since they are linear operations. We can rewrite $W_lX_l$ as $W_l \ast X_l$ where $\ast$ denotes the convolution operator and write Pool($X$) to denote the average pooling operator on $X$. Then, for example, the sequence Conv $\rightarrow$ Activation can be represented via the constraint
\begin{align}
    B_l(X_{l+1}, W_l \ast X_l) \le 0,
\end{align}
while the sequence Pool $\rightarrow$ Conv $\rightarrow$ Activation can be represented as
\begin{align}
    B_l(X_{l+1}, W_l \ast \text{Pool}(X_l)) \le 0.
\end{align}
Note that the pooling operation changes the dimension of the matrix.

\subsection{Prediction rule.}
In previous works that reinterpret activation functions as $\arg\min$ maps \citep{askari2018, zhang2017convergent}, the prediction at test time is defined as the solution to the optimization problem below 
\begin{align} \label{lpr}
	\hat{y} = \arg&\min_{y, (x_l)} \: {\cal L}(y ,W_Lx_L) + \lambda \sum^{L-1}_{l=0} B_l(x_{l+1}, W_lx_l)\notag \\
	\mbox{s.t.} & \: x_0 = x, 
\end{align}
where $x_0$ is test data point, $\hat{y}$ is the predicted value, and $x_l$, $l = 1,\hdots,L$ are the intermediate representations we optimize over. Note if $\mathcal{L}$ is a mean squared error, applying the traditional feed-forward rule gives an optimal solution to \eqref{lpr}. We find empirically that applying the standard feed-forward rule works well, even with a cross-entropy loss. 





\subsection{Batched model} \label{sec:batchedflnn}
The models discussed in the introduction usually require the entire data set to be loaded into memory which may be infeasible for very large data sets or for data sets that are continually changing. We can circumvent this issue by batching the model. By sequentially loading a part of the data set into memory and optimizing the network parameters, we are able to train the network with limited computational resources. Formally, the batched model is
\begin{align} \label{bfnnp}
	\displaystyle\min_{(W_l)_{l=0}^{L}, (X_l)_{l=1}^{L}}  \: {\cal L}(Y,W_LX_L) + \sum^L_{l=0}\rho_l \pi_l(W_l) \notag \\
	+ \lambda \sum^{L-1}_{l=0} B_l(W_lX_l, X_{l+1}) + \sum^{L}_{l=0} \gamma_l \|W_l - W_{l}^0\|_F^2  \notag \\ 
	\mbox{s.t. } X_0 = X ,
\end{align}

where $X_0$ contains only a batch of data points instead of the complete data set. The additional term in the objective $ \gamma_l  \|W_l - W_{l}^0\|_F^2  , ~~ l = 0,\hdots, L$ is introduced to moderate the change of the $W$-variables between subsequent batches; here $W_{l}^0$ represents the optimal $W$ variables from the previous batch and $\gamma \in \mathbb{R}_+^{L+1}$ is a hyperparameter vector. The $X$-variables are reinitialized each batch by feeding the new batch forward through the equivalent standard neural network. 

\subsection{Block-coordinate descent algorithm}
The model \eqref{fnnp} satisfies the following properties:
\begin{itemize}
\item For fixed $W$-variables, and fixed variables $(X_j)_{j \ne l}$, the problem is convex in $X_l$, and is decomposable across data points.
\item For fixed $X$-variables, the problem is convex in the $W$-variables, and is decomposable across layers, and data points.
\end{itemize}

The non-batched and batched Fenchel lifted network are trained using block coordinate descent algorithms highlighted in Algorithms \ref{alg:fullbcd} and \ref{alg:batchedbcd}. By exploiting the biconvexity of the problem, we can alternate over updating the $X$-variables and $W$-variables to train the network.

Note Algorithm \ref{alg:batchedbcd} is different from Algorithm \ref{alg:fullbcd} in three ways. First, re-initialization is required for the $X$-variables each time a new batch of data points are loaded. Second, the sub-problems for updating $W$-variables are different as shown in Section \ref{para:wbbatchsubprob}. Lastly, an additional parameter $K$ is introduced to specify the number of training alternations for each batch. Typically, we set $K = 1$.

\begin{figure}[t!]
    \vspace*{-\baselineskip}
    
    \begin{minipage}{\columnwidth}
    \begin{algorithm}[H]
	    \caption{Non-batched BCD Algorithm}
	    \label{alg:fullbcd}
	    \begin{algorithmic}[1]
		\STATE{Initialize $(W_{l})^L_{l=0}$.}
		\STATE{Initialize $X_0$ with input matrix $X$.}
		\STATE{Initialize $X_1, \hdots, X_L$ with neural network feed forward rule.}
		\REPEAT
		\STATE{$X_L \gets \arg\min_{Z} \: {\cal L}(Y, W_LZ) + \lambda B_{L-1}(Z, X_{L-1}^0)$}
		\FOR{$l = L-1, \hdots, 1$}
		\STATE{$X_l \leftarrow \arg\min_{Z} \: B_l(X_{l+1}, W_lZ)	+ B_{l-1}(Z, X_{l-1}^0)$}
		\ENDFOR
		\STATE{$W_L \leftarrow \arg\min_{W} \: {\cal L}(Y,WX_L) + \rho_l\pi_l(W)$}
		\FOR{$l = L-1, \hdots, 0$}
		\STATE{$W_l \leftarrow \arg\min_{W} \: \lambda B_l(X_{l+1},WX_l) + \rho_l\pi_l(W)$}
		\ENDFOR
		\UNTIL{convergence}
	\end{algorithmic}
\end{algorithm}
\end{minipage}

\begin{minipage}{\columnwidth}
\begin{algorithm}[H]
	\caption{Batched BCD Algorithm}
	\label{alg:batchedbcd}
	\begin{algorithmic}[1]
		\STATE{Initialize $(W_{l})^L_{l=0}$.}
		\REPEAT
		\STATE{$(W_{l}^0)^L_{l=0} \leftarrow (W_{l})^L_{l=0}$}
		\STATE{Re-initialize $X_0$ with a batch sampled from input matrix $X$.}
		\STATE{Re-initialize $X_1, \hdots, X_L$ with neural network feed forward rule.}
		\FOR{alternation$ = 1, \hdots, K$}
		\STATE{$X_L \leftarrow \arg\min_{Z} \: {\cal L}(Y, W_LZ) + \lambda B_{L-1}(Z, X_{L-1}^0)$}
		\FOR{$l = L-1, \hdots, 1$}
		\STATE{$X_l \leftarrow \arg\min_{Z} \: \lambda B_l(X_{l+1}, W_lZ)	+ \lambda B_{l-1}(Z, X_{l-1}^0)$}
		\ENDFOR
		\STATE{$W_L \leftarrow \arg\min_{W} \: {\cal L}(Y,WX_L) + \rho_L\pi_L(W) + \gamma_l\|W - W_{L}^0\|_F^2$}
		\FOR{$l = L-1, \hdots, 0$}
		\STATE{$W_l \leftarrow \arg\min_{W} \: \lambda B_l(X_{l+1}, WX_l) + \rho_l\pi_l(W) + \gamma_l\|W - W_{l}^0\|_F^2$}
		\ENDFOR
		\ENDFOR
		\UNTIL{convergence}
	\end{algorithmic}
\end{algorithm}
\end{minipage}

\end{figure}

\subsubsection{Updating $X$-variables}
For fixed $W$-variables, the problem of updating $X$-variables can be solved by cyclically optimizing $X_l, ~~ l = 1, \hdots, L, $ with $(X_j)_{j \ne l}$ fixed. We initialize our $X$-variables by feeding forward through the equivalent neural network and update the $X_l$'s backward from $X_L$ to $X_1$ in the spirit of backpropagation.

We can derive the sub-problem for $X_l, ~~l=1, \hdots, L-1$ with $(X_j)_{j \ne l}$ fixed from \eqref{fnn}. The sub-problem writes
\begin{align} \label{fullxsubproblem}
	X_l^+ = \arg\min_{Z} & \:  B_l(X_{l+1}, W_lZ ) + B_{l-1}(Z, X_{l-1}^0)
\end{align}

where $X_{l-1}^0 := W_{l-1}X_{l-1}$. By construction, the sub-problem \eqref{fullxsubproblem} is convex and parallelizable across data points. Note in particular when our activation is a ReLU, the objective function in~\eqref{fullxsubproblem} is in fact strongly convex and has a continuous first derivative.

For the last layer (i.e., $l = L$), the sub-problem derived from \eqref{fnn} writes differently
\begin{align} \label{fullxsubproblemlast}
	X_L^+ = \arg\min_{Z} & \: {\cal L}(Y, W_LZ) +\lambda B_{L-1}(Z, X_{L-1}^0)  
\end{align}

where $X_{L-1}^0 := W_{L-1}X_{L-1}$. For common losses such as mean square error (MSE) and cross-entropy, the subproblem is convex and parallelizable across data points. Specifically, when the loss is MSE and we use a ReLU activation at the layer before the output layer, \eqref{fullxsubproblemlast} becomes
\begin{align}\label{eq:xll}
	X_L^+ = \arg\min_{Z \ge 0} & \: ||Y - W_LZ||^2_F + \dfrac{\lambda}{2} ||Z - X_{L-1}^0||^2_F \notag
\end{align}

where $X_{L-1}^0 := W_{L-1}X_{L-1}$ and we use the fact that $X_{L-1}^0$ is a constant to equivalently replace $B_{L-1}$ as in \eqref{eq:reluBC} by a squared Frobenius term. The sub-problem is a non-negative least squares for which specialized methods exist \cite{kim2014algorithms}.

For a cross-entropy loss and when the second-to-last layer is a ReLU activation, the sub-problem for the last layer takes the convex form
\begin{align}
	X_L^+ = &\arg\min_{Z \ge 0} -\Tr Y^\top \log s(W_LZ) +\notag\\&\dfrac{\lambda}{2} ||Z - X_{L-1}^0||^2_F ,
\end{align}
where $s(\cdot): \mathbb{R}^n \rightarrow \mathbb{R}^n$ is the softmax function and $\log$ is the element-wise logarithm. \citet{askari2018} show how to solve the above problem using bisection.


\subsubsection{Updating $W$-variables}
With fixed $X$-variables, the problem of updating the $W$-variables can be solved in parallel across layers and data points. 
\paragraph{Sub-problems for non-batched model.} The problem of updating $W_l$ at intermediate layers becomes
\begin{align} \label{fullwbsubproblem}
	W_l = \arg\min_{W,} & \: \lambda B_l(X_{l+1},WX_l) + \rho_l\pi_l(W).
\end{align}
Again, by construction, the sub-problem \eqref{fullwbsubproblem} is convex and parallelizable across data points. Also, since there is no coupling in the $W$-variables between layers, the sub-problem \eqref{fullwbsubproblem} is parallelizable across layers.

For the last layer, the sub-problem becomes
\begin{align} \label{fullwbsubproblemlast}
	W_L = \arg\min_{W} & \: {\cal L}(Y,WX_L) + \rho_L\pi_L(W).
\end{align}

\paragraph{Sub-problems for batched model.} \label{para:wbbatchsubprob}
As shown in Section \ref{sec:batchedflnn}, the introduction of regularization terms between $W$ and values from a previous batch require the sub-problems (\ref{fullwbsubproblem}, \ref{fullwbsubproblemlast}) be modified. \eqref{fullwbsubproblem} now becomes
\begin{align} \label{batchedwbsubproblem}
	W_l = \arg\min_{W}  \: \lambda & B_l(X_{l+1}, WX_l) + \rho_l\pi_l(W) \notag \\
	+ &\gamma_l\|W - W_{l}^0\|_F^2 ,
\end{align}
while \eqref{fullwbsubproblemlast} becomes
\begin{align} \label{batchedwbsubproblemlast}
    W_L = \arg\min_{W}  \: {\cal L}&(Y,WX_L) + \rho_L\pi_L(W) \notag \\
	+ &\gamma_L\|W - W_{L}^0\|_F^2  .
\end{align}
Note that these sub-problems in the case of a ReLU activation are strongly convex and parallelizable across layers and data points.

%


\section{Numerical Experiments}\label{sec:exp}
In this section, we compare Fenchel lifted networks against other lifted models discussed in the introduction and against traditional neural networks. In particular, we compare our model against the models proposed by \citet{taylor2016training}, \citet{lau2018proximal} and \citet{askari2018} on MNIST. Then we compare Fenchel lifted networks against a fully connected neural network and LeNet-5 \citep{lecun1998gradient} on MNIST. Finally, we compare Fenchel lifted networks against LeNet-5 on CIFAR-10. For a discussion on hyperparameters and how model paramters were selected, see Appendix \ref{apx:hyperparam}.

\subsection{Fenchel lifted networks vs.\ lifted models}
Here, we compare the non-batched Fenchel lifted network against the models proposed by \citet{taylor2016training}\footnote{Code available in \url{https://github.com/PotatoThanh/ADMM-NeuralNetworks}}, \citet{lau2018proximal}\footnote{Code available in \url{https://github.com/deeplearning-math/bcd_dnn}} and \citet{askari2018}. The former model is trained using ADMM and the latter ones using the BCD algorithms proposed in the respective papers. In Figure \ref{fig:bcd}, we compare these models on MNIST with a 784-300-10 architecture (inspired by \citet{lecun1998gradient}) using a mean square error (MSE) loss. 

\begin{figure}[h!]
    \centering
    \includegraphics[width=1\columnwidth]{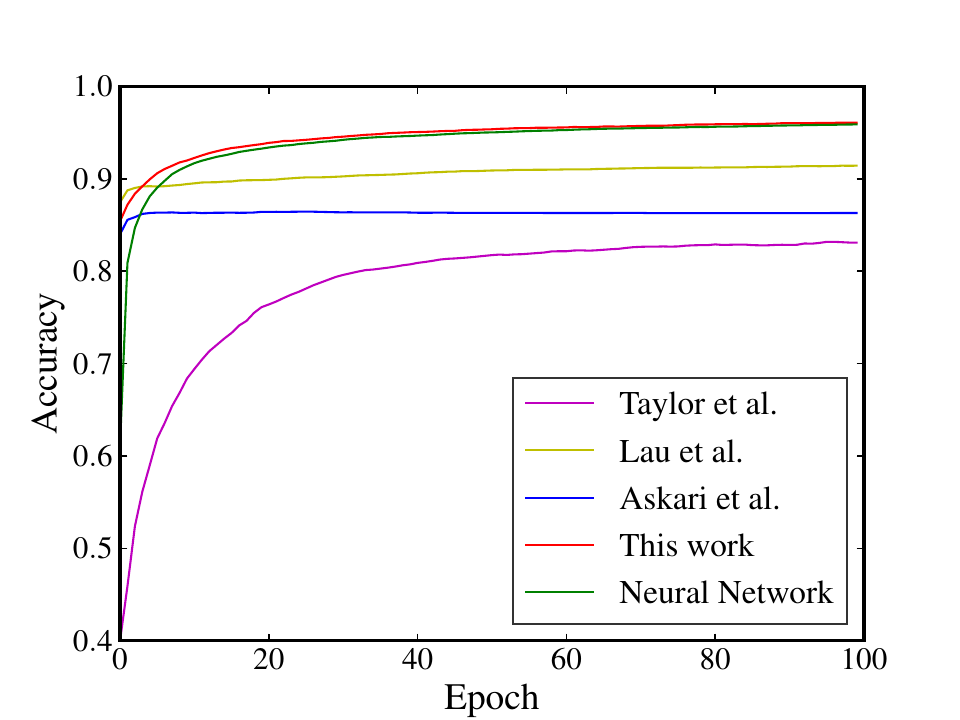}
    \caption{Test set performance of different lifted methods with a 784-300-10 network architecture on MNIST with a MSE loss.  Final test set performances: \textbf{Taylor et al.} 0.834, \textbf{Lau et al.} 0.914, \textbf{Askari et al.} 0.863,  \textbf{Neural Network} 0.957, \textbf{This work} 0.961.}
    \label{fig:bcd}
\end{figure}

After multiple iterations of hyperparameter search with little improvement over the base model, we chose to keep the hyperparameters for \citet{taylor2016training} and \citet{lau2018proximal} as given in the code. The hyperparameters for \citet{askari2018} were tuned using cross validation on a hold-out set during training. Our model used these same parameters and cross validated the remaining hyperparameters. The neural network model was trained using SGD. The resulting curve of the neural network is smoothed in Figure \ref{fig:bcd} for visual clarity. From Figure \ref{fig:bcd} it is clear that Fenchel lifted networks vastly outperform other lifted models and achieve a test set accuracy on par with traditional networks.


\subsection{Fenchel lifted networks vs.\ neural networks on MNIST}
For the same 784-300-10 architecture as the previous section, we compare the batched Fenchel lifted networks against traditional neural networks trained using first order methods. We use a cross entropy loss in the final layer for both models. The hyperparameters for our model are tuned using cross validation. Figure \ref{fig:mlp} shows the results.

\begin{figure}[h!]
\centering
\includegraphics[width=1\columnwidth]{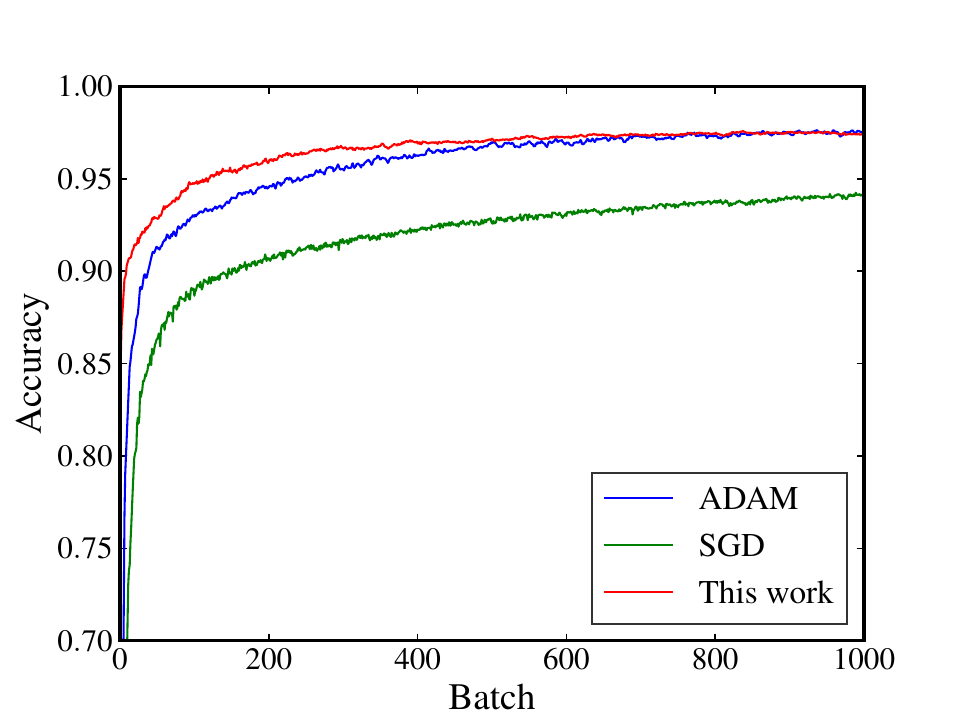}
\caption{Test set performance of Fenchel lifted networks and fully connected networks trained using Adam and SGD on a 784-300-10 network architecture on MNIST with cross entropy loss. Total training time was 10 epochs. Final test set performances: \textbf{SGD} 0.943, \textbf{Adam} 0.976, \textbf{This work} 0.976.}
\label{fig:mlp}
\end{figure}

As shown in Figure \ref{fig:mlp}, Fenchel lifted networks learn faster than traditional networks as shown by the red curve being consistently above the blue and green curve. Although not shown, between batch 600 and 1000, the accuracy on a training batch would consistently hit 100\% accuracy. The advantage of the Fenchel lifted networks is clear in the early stages of training, while towards the end the test set accuracy and the accuracy of an Adam-trained network converge to the same values.

We also compare Fenchel lifted networks against a LeNet-5 convolutional neural network on MNIST. The network architecture is 2 convolutional layers followed by 3 fully-connected layers and a cross entropy loss on the last layer. We use ReLU activations and average pooling in our implementation. Figure \ref{flnnadam} plots the test set accuracy for the different models.

\begin{figure}[h!]
\centering
\includegraphics[width=1\columnwidth]{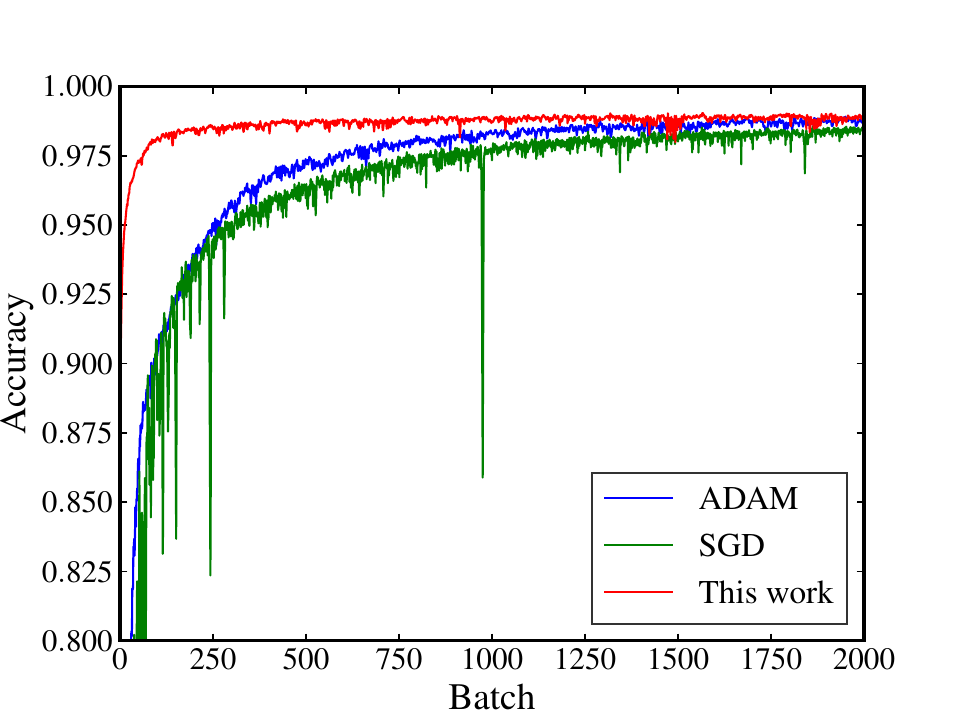}
\caption{Test set performance of Fenchel lifted networks and LeNet-5 trained using Adam and SGD on MNIST with a cross entropy loss. Total training time was 20 epochs. Final test set performances: \textbf{SGD} 0.986, \textbf{Adam} 0.989, \textbf{This work} 0.990.}
\label{flnnadam}
\end{figure}

In Figure \ref{flnnadam}, our method is able to nearly converge to its final test set accuracy after only 2 epochs while Adam and SGD need the full 20 epochs to converge. Furthermore, after the first few batches, our model is attaining over 90\% accuracy on the test set while the other methods are only at 80\%, indicating that our model is doing something different (in a positive way) compared to traditional networks, giving them a clear advantage in test set accuracy.

\subsection{Fenchel lifted networks vs CNN on CIFAR-10}
In this section, we compare the LeNet-5 architechture and with Fenchel lifted networks on CIFAR-10. Figure \ref{fig:cifar10} compares the accuracies of the different models.

\begin{figure}[h!]
\centering
\includegraphics[width=1\columnwidth]{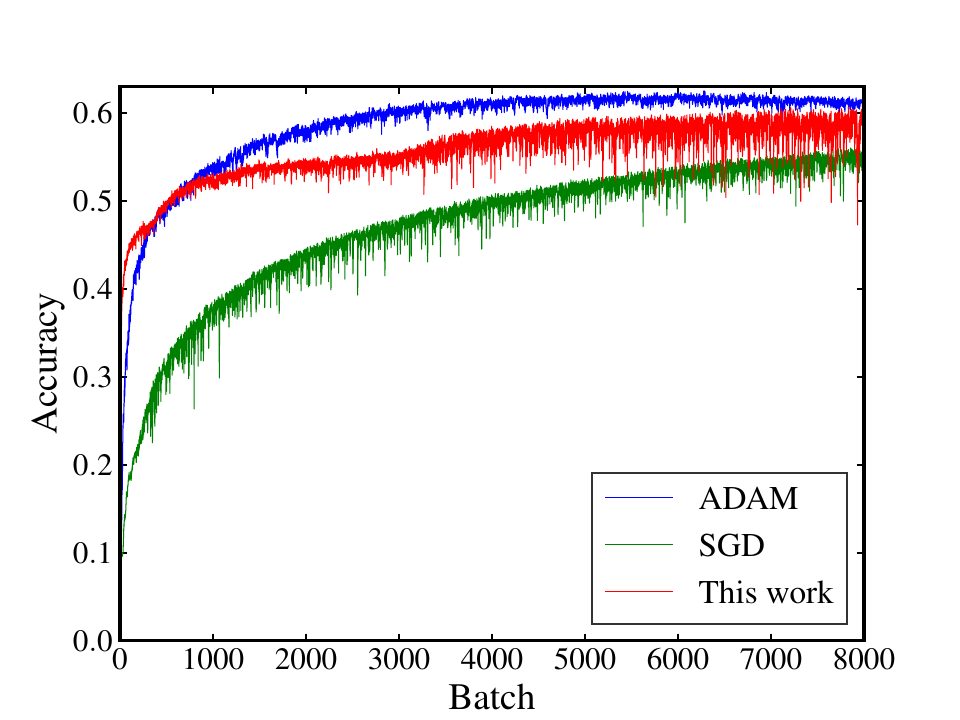}
\caption{Test set performance of Fenchel lifted networks and LeNet-5 trained using Adam and SGD on CIFAR-10 with a cross entropy loss. Total training time was 80 epochs. Final test set performance: \textbf{SGD} 0.565, \textbf{Adam} 0.625, \textbf{This work} 0.606}
\label{fig:cifar10}
\end{figure}

In this case, the Fenchel lifted network still outperforms the SGD trained network and only slightly under performs compared to the Adam trained network. The larger variability in the accuracy per batch for our model can be attributed to the fact that in this experiment, when updating the $W$-variables, we would only take one gradient step instead of solving \eqref{batchedwbsubproblem} and \eqref{batchedwbsubproblemlast} to completion. We did this because we found empirically solving those respective sub-problems to completion would lead to poor performance at test time.

\section{Conclusion and Future Work}
In this paper we propose Fenchel lifted networks, a family of models that provide a rigorous lower bound of the traditional neural network training problem. Fenchel lifted networks are similar to other methods that lift the dimension of the training problem, and thus exhibit many desirable properties in terms of scalability and the parallel structure of its sub-problems. As a result, we show that our family of models can be trained efficiently using block coordinate descent where the sub-problems can be parallelized across data points and/or layers. Unlike other similar lifted methods, Fenchel lifted networks are able to compete with traditional fully connected and convolutional neural networks on standard classification data sets, and in some cases are able to outperform them.

Future work will look at extending the ideas presented here to Recurrent Neural Networks, as well as exploring how to use the class of models described in the paper to train deeper networks. 



\bibliographystyle{apalike}
\bibliography{ref}

\clearpage
\appendix
\onecolumn
\gdef\thesection{Appendix \Alph{section}}

{\centering{{\LARGE\bfseries Supplementary material}}}

\section{Variable Scaling} \label{apx:varscale}
Note that the new model \eqref{fnnp} has introduced $L+1$ more hyperparameters. We can use variable scaling and the dual formulation to show how to effectively reduce this to only \textit{one} hyperparameter. Consider the model with ReLU activations, that is, the biconvex function as in \eqref{eq:reluBC} and regularization functions $\pi_l(W_l) = \|W_l\|_F^2$ for $l = 0,\hdots,L$. Note that $B_\phi$ is homogeneous of degree 2, that is for any $U,V$ and $\gamma$ we have 
\begin{align*}
    \gamma B_\phi(V,U) = B_\phi(\sqrt{\gamma} V, \sqrt{\gamma} U)
\end{align*}
Define $\lambda_{-1} = 1$ and the scalings
\begin{align*}
    \Bar{X}_l := \sqrt{\lambda_{l-1}} X_l, \; \; \Bar{W}_l := \sqrt{\dfrac{\lambda_l}{\lambda_{l-1}}} W_l, \;\; 
\end{align*}
Then \eqref{fnnp} becomes
\begin{align} \label{eq:fn_dual_scale}
    G(\lambda) := &\min_{(\bar{W}_l)_{l=0}^{L}, (\bar{X}_l)_1^{L+1}} \: {\cal L}(Y,\sqrt{\lambda_L}( \bar{W}_L\bar{X}_L)) \notag \\
	&+ \sum^L_{l=0}\rho_l 
	\pi_l(\sqrt{\dfrac{\lambda_{l-1}}{\lambda_l}} W_l)
	+ \sum_{l=0}^{L-1} B_l(\bar{X}_{l+1}, 
	\bar{W}_l \bar{X}_l) \notag \\
	\mbox{s.t.} & \: \bar{X}_0 = X, \; \Bar{X}_l \geq 0, \; l = 0,\hdots,L
\end{align}

Using the fact $\pi_l(W_l) = \|W_l\|_F^2$ and defining $\bar{\rho}_l = \rho_l \dfrac{\lambda_{l-1}}{\lambda_l}$ we have

\begin{align} \label{eq:fn_dual_scale2}
    G(\lambda) := &\min_{(\bar{W}_l)_{l=0}^{L}, (\bar{X}_l)_1^{L+1}} \: {\cal L}(Y,\sqrt{\lambda_L}( \bar{W}_L\bar{X}_L)) \notag \\
	&+ \sum^L_{l=0}
	\bar{\rho}_l \|\bar{W}_l\|_F^2
	+ \sum_{l=0}^{L-1} B_l(\bar{X}_{l+1}, 
	\bar{W}_l \bar{X}_l) \notag \\
	\mbox{s.t.} & \: \bar{X}_0 = X, \; \Bar{X}_l \geq 0, \; l = 0,\hdots,L
\end{align}

where $G(\lambda)$ is now only a function of one variable $\lambda_L$ as opposed to $L$ variables. Note that this argument for variable scaling still works when we use average pooling or convolution operations in conjunction with a ReLU activation since they are linear operations. Note furthermore that the same scaling argument works in place of any norm due to the homogeneity of norms -- the only thing that would change is how $\bar{\rho}$ is scaled by $\lambda_{l-1}$ and $\lambda_l$.

Another way to show that we only require one hyperparameter $\lambda$ is to note the equivalence
\begin{align*}
    B_l (v,u) \leq 0 \;\;  \forall l \Longleftrightarrow \sum_{l} B_l (v,u) \leq 0
\end{align*}
Then we may replace the $L$ biconvex constraints in \eqref{fnn} by the equivalent constraint $\sum_{l} B_l (v,u) \leq 0$. Since this is only one constraint, when we dualize we only introduce \emph{one} Lagrange multiplier $\lambda$.

\section{One-layer Regression Setting} \label{apx:global}
In this section, we show that for a one layer network we are able to convert a non-convex optimization problem into a convex one by using the BC condition described in the main text.\\

Consider a regression setting where $Y = \phi(W^\ast X)$ for some fixed $W^\ast \in \mathbb{R}^{p\times n}$ and a given data matrix $X \in \mathbb{R}^{n \times m}$. Given a training set $(X,Y)$ we can solve for $W$ by solving the following non-convex problem

\begin{equation} \label{eq:ls}
    \min_W ~ \|Y - \phi(W X)\|_F^2.
\end{equation}

We could also solve the following relaxation of \eqref{eq:ls} based on the BC condition

\begin{align} \label{eq:cvx}
    \min_W ~ B_\phi(Y,WX)
\end{align}

Note \eqref{eq:cvx} is trivially convex in $W$ by definition of $B_\phi(\cdot,\cdot)$. Furthermore, by construction $B_\phi(Y,WX) \geq 0 $ and $B_\phi(Y,WX) = 0$ if and only if $Y = \phi(WX)$. Since $Y = \phi(W^\ast X)$, it follows $W^\ast$ (which is the minimizer of \eqref{eq:ls}) is a global minimizer of the convex program \eqref{eq:cvx}. Therefore, we can solve the original non-convex problem \eqref{eq:ls} to global optimality by instead solving the convex problem presented in \eqref{eq:cvx}.

\section{Hyperparameters for Experiments}
\label{apx:hyperparam}
For all experiments that used batching, the batch size was fixed at 500 and $K=1$. We observed empirically that larger batch sizes improved the performance of the lifted models. To speed up computations, we set $K=1$ and empirically find this does not affect final test set performance. For batched models, we do not use $\pi_l(\cdot)$ since we explicitly regularize through batching (see \eqref{bfnnp}) while for the non-batched models we set $\pi_l(W_l) = \|W_l\|_F^2$ for all $l$. For models trained using Adam, the learning rate was set to $\eta = 10^{-3}$ and for models trained using SGD, the learning rate was set to $\eta = 10^{-2}$. The learning rates were a hyperparamter that we picked from \{$10^{-1}, 10^{-2}, 10^{-3}, 10^{-4} $\} to give the best final test performance for both Adam and SGD. \\

For the network architechtures described in the experimental results, we used the following hyperparamters:

\begin{itemize}
    \item Fenchel Lifted Network for LeNet-5 architecture
    \begin{enumerate}
        \item $\rho_1 = 1e-4, \lambda_1 = 5$
        \item $\rho_2 = 1e-2, \lambda_2 = 5$
        \item $\rho_3 = 1, \;\;\;\;\;\;\;\; \lambda_3 = 1$
        \item $\rho_4 = 1, \;\;\;\;\;\;\;\; \lambda_4 = 1$
        \item $\rho_5 = 1$
    \end{enumerate}
    \item Fenchel Lifted Network for 784-300-10 architecture (batched)
    \begin{enumerate}
        \item $\rho_1 = 1, \;\;\;\;\;\;\;\;\; \lambda_1 = 0.1$
        \item $\rho_2 = 100$
    \end{enumerate}
    \item Fenchel Lifted Network for 784-300-10 architecture (non-batched)
    \begin{enumerate}
        \item $\rho_1 = 1e-2, \; \lambda_1 = 0.1$
        \item $\rho_2 = 10$
    \end{enumerate}
\end{itemize}

For all weights the initialization is done through Xavier initialization implemented in TensorFlow. The $\rho$ variables are chosen to balance the change of variables across layers in iterations. Although the theory in Appendix A states we can collapse all $\lambda$ hyperparameters into a single hyperparameter, due to time constraints, we were unable to implement this change upon submission. We also stress that the hyperparamter search over the $\rho$'s were very coarse and a variety of $\rho$ values worked well in practice; for simplicitly we only present the ones we used to produce the plots in the experimental results.

\end{document}